\newcommand{\Real}{\mathbb R}
\newcommand{\eps}{\varepsilon}
\newcommand{\abs}[1]{\left\vert#1\right\vert}
\newcommand{\norm}[1]{\left\Vert#1\right\Vert}
\newcommand{\ra}{\rightarrow}
\DeclarePairedDelimiter\inner{\langle}{\rangle}
\DeclareMathOperator{\E}{\mathbb{E}}
\renewcommand{\set}[1]{\left\{#1\right\}}
\newcommand{\U}{\mathcal U}
\renewcommand{\S}{\mathcal S}
\newcommand{\M}{\mathcal M}
\renewcommand{\O}{\mathcal O}
\title[Almost Sure Saddle Avoidance of Stochastic Gradient Methods]{\bfseries Almost Sure Saddle Avoidance of Stochastic Gradient Methods without the Bounded Gradient Assumption}
\author{%
 \Name{Jun Liu} \Email{j.liu@uwaterloo.ca}\\
 \addr Department of Applied Mathematics, University of Waterloo, Waterloo, Canada
 \AND
 \Name{Ye Yuan} \Email{yye@hust.edu.cn}\\
 \addr School of Artificial Intelligence and Automation \& School of Mechanical Science and Engineering, Huazhong University of Science and Technology, Wuhan, China
}
\begin{document}

\maketitle

\begin{abstract}%
We prove that various stochastic gradient descent methods, including the stochastic gradient descent (SGD), stochastic heavy-ball (SHB), and stochastic Nesterov's accelerated gradient (SNAG) methods, almost surely avoid any strict saddle manifold. To the best of our knowledge, this is the first time such results are obtained for SHB and SNAG methods. Moreover, our analysis expands upon previous studies on SGD by removing the need for bounded gradients of the objective function and uniformly bounded noise. Instead, we introduce a more practical local boundedness assumption for the noisy gradient, which is naturally satisfied in empirical risk minimization problems typically seen in training of neural networks.
\end{abstract}

\begin{keywords}%
Stochastic gradient descent, stochastic heavy-ball, stochastic Nesterov's accelerated gradient, almost sure saddle avoidance
\end{keywords}

\section{Introduction}

Stochastic gradient descent (SGD) and its variants, such as the stochastic heavy-ball (SHB) (i.e., stochastic gradient descent with momentum) and stochastic Nesterov's accelerated gradient (SNAG) methods, have emerged as popular optimization algorithms in machine learning and other computational fields. Despite their widespread use, a fundamental question that remains is whether these methods are effective at avoiding strict saddle manifolds, which can hinder convergence to optimal solutions. In this paper, we address this question and demonstrate that SGD, SHB, and SNAG almost surely avoid strict saddle manifolds.

To the best of our knowledge, this is the first time such results have been obtained for SHB and SNAG methods. Furthermore, our analysis expands upon previous studies by removing the requirement for bounded gradients in the objective function and replacing it with a more practical local boundedness assumption for the noisy gradient, which is commonly satisfied in empirical risk minimization problems such as in neural network training. Our findings provide valuable insights into the behavior of these algorithms and have implications for their use in various computational tasks.

\subsection{Related work} 

For deterministic gradient descent methods, \cite{lee2016gradient,lee2019first} proved that with a step size smaller that $1/L$, where $L$ is the Lipschitz constant of the gradient, gradient descent always avoids strict saddles unless initialized on a set of measure zero (i.e., the stable manifold of the saddles). Various extensions of this result were made, with different assumptions on the gradient oracle, choice of step sizes, and structure of the saddle manifold. Readers are referred to \cite{du2017gradient,vlatakis2019efficiently,jin2017escape,lee2016gradient,lee2019first,panageas2017gradient,panageas2019first} and references therein. 

For saddle point avoidance by stochastic gradient methods, early work by \cite{pemantle1990nonconvergence} and \cite{brandiere1996algorithmes} in the context of stochastic approximations showed that standard SGD almost surely avoids hyperbolic saddle points, i.e., points $x^*$ such that $\lambda_{\min}(\nabla^2 f(x^*))<0$ and $\text{det}(\nabla^2 f(x^*))\neq 0$. The work by \cite{benaim1995dynamics} proved almost sure avoidance of hyperbolic linearly unstable cycle by SGD. Later work by \cite{brandiere1998some,benaim1999dynamics} extended such results to show that SGD-type algorithms almost surely avoid more general repelling sets. More recently, using different techniques and under different assumptions, \cite{ge2015escaping} showed that SGD avoids strict saddles points satisfying $\lambda_{\min}(\nabla^2 f(x^*))<0$ with high probability. More specifically, they showed that with a constant step size $\eta$, SGD produces iterates close to a local minimizer and hence avoids saddle points, with probability at least $1-\zeta$, after $\Theta(\log(1/\zeta)/\eta^2)$ iterations. The work of \cite{daneshmand2018escaping,fang2019sharp} further obtained results on high-probability avoidance of saddle points and convergence to second-order stationary points, while the more recent work by \cite{vlaski2022second} proved efficient escape from saddle points under expectation. 

The work closest to ours is that of \cite{mertikopoulos2020almost}, in which the authors proved that SGD almost surely avoids any strict saddle manifold for a wide spectrum of vanishing step size choices, following earlier work by \cite{pemantle1990nonconvergence,benaim1995dynamics,benaim1999dynamics}. However, in these works, it is always assumed that the noise on the stochastic gradient is bounded. Moreover, while making an effort to circumvent the bounded trajectory assumption in prior work, \cite{mertikopoulos2020almost} also assumed that the objective function is $G$-Lipschitz, which means the gradient is always bounded. This is, however, a very strong assumption, as even quadratic objective functions do not satisfy it. 

To the best of the authors' knowledge, our paper is the first to show that the SHB and SNAG methods almost surely avoid saddle points. Our work also sharpens the analysis for SGD by removing the bounded gradient assumption and relaxing the bounded noise assumption to a local boundedness assumption, which is always satisfied in empirical risk minimization problems. The key ingredient required to achieve our results was provided by \cite{liu2022almost}, who showed that both SHB and SNAG almost surely produce iterates with gradients converging to zero, even in the non-convex setting under very weak assumptions \citep{khaled2020better} on the stochastic gradient. This almost sure convergence of the gradient, combined with the same asymptotic non-flatness assumption on the objective function as in \cite{mertikopoulos2020almost}, allowed us to circumvent the bounded gradient and bounded noise assumptions.

\section{Problem formulation and assumptions}

\subsection{Optimization problem setup}

Consider the unconstrained minimization problem 
\begin{equation}
	\label{eq:min}
	\min_{x\in\Real^d} f(x), 
\end{equation}
where $f:\,\Real^d\ra\Real$ is a potentially non-convex function. We aim to solve it through stochastic gradient methods. For example, the function $f$ can represent either an expected risk of the form $f(x)=\mathbb{E}[f(x;\xi)]$, where $\xi$ is a random sample or set of samples, or an empirical risk of the form $f(x)=\frac{1}{n}\sum_{i=1}^nf_i(x;\xi_i)$, where $
\set{\xi_i}_{i=1}^n$ are realizations of $\xi$. The following assumptions are made.

\begin{assumption}[$L$-smoothness]\label{as:smoothness}
	The function $f:\,\Real^d\ra\Real$ is %
	is smooth  
	and its gradient $\nabla f$ is $L$-Lipschitz, i.e., there exists a constant $L>0$ such that $$\norm{\nabla f(x)-\nabla f(y)}\le L\norm{x-y}$$ for all $x,y\in\Real^d$. 
\end{assumption}

By ``smoothness'', we require that $\nabla f$ is $C^2$ so that we can apply the center manifold theorem to the gradient flow dynamics $\dot{x}=-\nabla f(x)$. This is central to the argument for avoiding saddles \citep{pemantle1990nonconvergence,benaim1999dynamics,mertikopoulos2020almost}. The usual convergence analysis of gradient descent often only requires that $f$ is continuously differentiable. 
It is well known that Assumption \ref{as:smoothness} implies the following (see, e.g., \citet[Lemma 1.2.3]{nesterov2003introductory}) 
\begin{equation}
	\label{eq:L}
	f(y) \le f(x) + \inner{\nabla f(x),y-x} + \frac{L}{2}\norm{y-x}^2,\quad \forall x,y\in\Real^d.
\end{equation}

\begin{assumption}[Asymptotic non-flatness]\label{as:nonflat}
	The function $f:\,\Real^d\ra\Real$ is not asymptotically flat in the sense that $\liminf_{\norm{x}\ra\infty}\norm{\nabla f(x)}>0$. 
\end{assumption}

These assumptions are fairly standard for non-convex optimization. Assumption \ref{as:smoothness} essentially implies that gradient descent with sufficiently small step sizes is well behaved. 
Assumption \ref{as:nonflat} ensures that once gradient descent converge to a critical set on which gradient vanishes, then this set can be bounded away from infinity.

\subsection{Stochastic gradient oracle}

We assume that we have access to an unbiased stochastic gradient oracle which produces $\nabla f(x;\xi)$ satisfying $\E[\nabla f(x;\xi)]=\nabla f(x)$ for each $x\in \Real^n$. We further assume that the stochastic gradient satisfies the following expected smoothness condition \citep{khaled2020better}. 

	\begin{assumption}[ABC condition]\label{as:abc}
		There exist nonnegative constants $A$, $B$, and $C$ such that 
		\begin{equation}\label{eq:abc}
			\E[\norm{\nabla f(x;\xi)}^2] \le A(f(x)-f^*) + B\norm{\nabla f(x)}^2 + C,\quad \forall x\in \Real^d. 
		\end{equation}
	\end{assumption}

The above assumption was proposed in \cite{khaled2020better} as ``the weakest assumption'' for analysis of stochastic gradient descent in the non-convex setting. We further make the following local boundedness assumption on the stochastic gradient.

	\begin{assumption}[Local boundedness]\label{as:localb}
		For each compact set $K\in \Real^n$, there exists a constant $C$
 such that $\norm{\nabla f(x;\xi)}\le C$  for all $x\in K$ almost surely. 
	\end{assumption}

\begin{remark}
The local boundedness assumption is clearly weaker than the assumption of almost surely bounded noise \citep{mertikopoulos2020almost}, i.e., there exists a constant $C$ such that 
$$\norm{\nabla f(x;\xi)-\nabla f(x)}\le C$$ for all $x\in \Real^n$ almost surely. Indeed, since $\nabla f(x)$ is assumed to be (Lipschitz) continuous and hence locally bounded, bounded noise and local boundedness of $\nabla f(x)$ implies local boundedness of $\nabla f(x;\xi)$. Assumption \ref{as:localb} is readily satisfied for 
stochastic gradient computed using a sample drawn from a finite number of samples where each sample gives a gradient function $\nabla f(x;\xi)$ that is locally bounded. For instance, 
let $f(x)=\frac{1}{n}\sum_{i=1}^nf_i(x)$, and suppose that each $\nabla f(x;\xi)$ corresponds to uniformly randomly choosing $i\in \set{1,\ldots,n}$ and computing $\nabla f(x;\xi)=\nabla f_i(x)$. If each $\nabla  f_i(x)$ is locally bounded, then Assumption \ref{as:localb} holds. 
\end{remark}

Finally, we need one more assumption on the stochastic gradient. 

\begin{assumption}\label{as:uniform}
	The error between the true gradient and any stochastic gradient is uniformly exciting in the sense that there exists some constant $b>0$ such that 
	$$
	\E[\inner{\nabla f(x;\xi)-\nabla f(x),v}^+] \ge b,
	$$
	for all $x\in\Real^d$ and all unit vector $v\in\Real^n$. 
\end{assumption}

The same assumption was made in prior work \citep{pemantle1990nonconvergence,benaim1996asymptotic,benaim1999dynamics,mertikopoulos2020almost}. As remarked in \cite{mertikopoulos2020almost}, this assumption is naturally satisfied by noisy gradient dynamics (e.g., as in \cite{ge2015escaping}), as well as finite sum objectives with at least $d$ objectives. 

\subsection{Stochastic gradient descent with momentum}

	The iteration of the stochastic heavy-ball (SHB) method is given by
	\begin{equation}\label{eq:shb1}
		x_{n+1} = x_n - \alpha_n g_n + \beta (x_n - x_{n-1}),\quad n\ge 1,
	\end{equation}
	where $g_n:=\nabla f(x_n;\xi_n)$ is the stochastic gradient at $x_n$, $\alpha_n$ is the step size, and $\beta\in [0,1)$. By convention, we let $x_1$ and $x_{0}$ be the initial conditions. Clearly, if $\beta=0$, SHB reduces to the standard SGD. 
	
	Define
	\begin{equation}\label{eq:zv}
		z_n = x_n - \frac{\beta}{1-\beta} v_n,\quad v_n = x_n -x_{n-1}.
	\end{equation}
	The iteration of SHB can be rewritten as \begin{equation}\label{eq:shb2}
		\begin{aligned}
			v_{n+1} & = \beta v_n -\alpha_n g_n,\\
			z_{n+1} & = z_n - \frac{\alpha_n}{1-\beta} g_n.
		\end{aligned}
	\end{equation}

    Similarly, the iteration of the stochast Nesterov's accelerated gradient (SNAG) method is given by
	\begin{equation}\label{eq:snag}
		\begin{aligned}
			y_{n+1} &= x_n - \alpha_n g_n,\\ 
			x_{n+1} &= y_{n+1} + \beta (x_n - x_{n-1}),
		\end{aligned}
	\end{equation}
	where $g_n:=\nabla f(x_n;\xi_n)$ is the stochastic gradient at $x_n$, $\alpha_n$ is the step size, and $\beta\in [0,1)$. Clearly, if $\beta=0$, SNAG also reduces to the standard SGD. 
	Define $z_n$ and $v_n$ as in (\ref{eq:zv}). The iteration of SNAG can be rewritten as 
	\begin{equation}\label{eq:snag2}
		\begin{aligned}
			v_{n+1} & = \beta v_n -\beta \alpha_n g_n,\\
			z_{n+1} & = z_n - \frac{\alpha_n}{1-\beta} g_n.
		\end{aligned}
	\end{equation}
	Indeed, (\ref{eq:snag2}) is almost identical to (\ref{eq:shb2}) except for the extra $\beta$ in the first equation for $v_{n+1}$.

\section{Stochastic gradient methods avoid strict saddle manifold}

\subsection{Statement of the main result}

Define the critical set as
\begin{equation}
	\label{eq:critical}
	\mathcal{C}=\set{x\in \Real^d:\,\nabla f(x)=0}.
\end{equation}

\begin{definition}[\cite{mertikopoulos2020almost}]\label{def:saddle}
A strict saddle manifold  $\mathcal{S}$ of $f$ is a smooth connected component of $\mathcal{C}$ satisfying 
\begin{enumerate}
	\item Every $x^*\in S$ is a strict saddle point \citep{lee2016gradient,lee2019first} of $f$, i.e., $\lambda_{\min}(\nabla^2 f(x^*))<0$.

	\item For all $x^*\in S$, all negative eigenvalues of $\nabla^2 f(x^*)$ are uniformly bounded from above by a negative constant and all positive eigenvalues of $\nabla^2 f(x^*)$ are uniformly bounded from below by a positive constant. 
\end{enumerate}
\end{definition}

The main result of the paper is stated below. 

\begin{theorem}\label{thm:main}
Let $\mathcal{S}$ be any strict saddle manifold of $f$. Suppose that $f$ satisfies Assumptions \ref{as:smoothness}--\ref{as:uniform}. Then $\mathbb{P}(x_n\ra \mathcal{S}\text{ as }n\ra\infty)=0$ for both SHB (\ref{eq:shb1}) and SNAG (\ref{eq:snag}).
\end{theorem}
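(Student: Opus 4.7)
The plan is to unify SHB and SNAG through the $(z_n,v_n)$ reformulations \eqref{eq:shb2}/\eqref{eq:snag2} and reduce the analysis to an SGD-type iteration on $z_n$, then import the stable-manifold/Pemantle--Bena\"im-type saddle-avoidance machinery of \cite{mertikopoulos2020almost}. In both cases the $z$-iteration has the form
\begin{equation*}
    z_{n+1} = z_n - \tfrac{\alpha_n}{1-\beta} g_n, \qquad g_n = \nabla f(x_n;\xi_n),
\end{equation*}
so $z_n$ looks like vanilla SGD with effective step size $\tilde\alpha_n=\alpha_n/(1-\beta)$, except that the gradient is sampled at $x_n$ rather than at $z_n$. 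The coupling is controlled by the identity $x_n-z_n = \tfrac{\beta}{1-\beta}v_n$, so if we can show $v_n\to 0$ almost surely, the $x$-dynamics and $z$-dynamics share the same limiting behaviour and in particular $\{x_n\to\mathcal S\}=\{z_n\to\mathcal S\}$ on a full-measure event.

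First I would use the almost-sure gradient convergence result of \cite{liu2022almost} (which only needs the ABC condition, Assumption \ref{as:abc}) to conclude $\nabla f(x_n)\to 0$ a.s. Combined with Assumption \ref{as:nonflat}, this forces the iterates $\{x_n\}$ to stay in a bounded set almost surely. Assumption \ref{as:localb} then gives $\sup_n \|g_n\|<\infty$ a.s., and since $v_{n+1}=\beta v_n - c\alpha_n g_n$ with $\beta\in[0,1)$ and $\alpha_n\to 0$, a standard geometric-sum bound yields $v_n\to 0$ a.s. This is the step that replaces the bounded-gradient hypothesis of \cite{mertikopoulos2020almost}: local boundedness is enough because we first localize via asymptotic non-flatness.

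Next, rewrite the $z$-iteration as a perturbed SGD on $f$:
\begin{equation*}
    z_{n+1} = z_n - \tilde\alpha_n\nabla f(z_n) - \tilde\alpha_n\bigl[\nabla f(x_n)-\nabla f(z_n)\bigr] - \tilde\alpha_n\bigl[\nabla f(x_n;\xi_n)-\nabla f(x_n)\bigr].
\end{equation*}
The middle term is a predictable perturbation of size $O(\|v_n\|)$ that vanishes a.s., while the last is a martingale difference whose conditional variance is bounded on compacts and which inherits the uniform excitation of Assumption \ref{as:uniform}: for any unit $v$, $\E[\inner{\nabla f(x_n;\xi_n)-\nabla f(x_n),v}^+\mid\mathcal F_n]\ge b$. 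Now I would invoke the saddle-avoidance argument in the form used by \cite{mertikopoulos2020almost} (and going back to \cite{pemantle1990nonconvergence,benaim1999dynamics,brandiere1998some}). On the event that $z_n$ enters a tubular neighbourhood of $\mathcal S$, apply the center-stable manifold theorem to the $C^2$ vector field $-\nabla f$: the strict-saddle hypothesis (Definition \ref{def:saddle}) yields a local splitting into a center-stable direction and an unstable subspace of dimension $\ge 1$, whose negative eigenvalues are uniformly bounded away from $0$ along $\mathcal S$. Projecting the $z$-iteration onto the unstable direction and using uniform excitation together with the vanishing predictable perturbation shows that the projected coordinate gets pushed outward too often to converge to $0$, hence $\mathbb P(z_n\to\mathcal S)=0$.

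The main obstacle is the last step: extending the Pemantle--Bena\"im--Mertikopoulos argument to tolerate (i) a non-martingale momentum-induced drift $\nabla f(x_n)-\nabla f(z_n)$, and (ii) noise that is only locally bounded rather than uniformly bounded. Both are resolved by the localization in step two: on the a.s. event $\{x_n\}\subset K$ for some compact $K$, Assumption \ref{as:localb} gives a single uniform noise bound, and the drift is dominated by $L\|v_n\|\to 0$, so it can be absorbed into the ``small perturbation'' slot of the usual argument (cf.\ the non-convergence lemma of \cite{benaim1999dynamics}). Finally, since $x_n-z_n\to 0$ a.s.\ and the $x_n$ are a.s.\ bounded, $\{x_n\to\mathcal S\}\subseteq\{z_n\to\mathcal S\}$ up to a null set, which yields $\mathbb P(x_n\to\mathcal S)=0$ for both SHB and SNAG.
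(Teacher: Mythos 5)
Your outline follows essentially the same route as the paper: rewrite SHB/SNAG in the $(z_n,v_n)$ coordinates so that $z_n$ is a perturbed SGD iteration, use the almost-sure gradient convergence of \cite{liu2022almost} plus Assumption \ref{as:nonflat} to localize the iterates, invoke Assumption \ref{as:localb} for a uniform bound on the stopped/bounded event, and then run the Pemantle--Bena\"{\i}m non-convergence argument (the paper uses the Lyapunov function $V$ of \cite[Proposition 9.5]{benaim1999dynamics} built from the distance to the center-stable manifold, together with \cite[Lemma 5.5]{pemantle1992vertex}, which is exactly the ``projected coordinate gets pushed outward too often'' step you describe). The decomposition of the $z$-update into a predictable drift $\nabla f(x_n)-\nabla f(z_n)$ and a martingale difference is also identical to the paper's Proposition \ref{prop:limset}.

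There is one step where your justification, as written, would not go through: you absorb the momentum drift into the ``small perturbation'' slot on the grounds that it is $O(\norm{v_n})$ with $v_n\to 0$ a.s. That is not enough. In the Pemantle-type lemma the drift enters the conditional-expectation lower bound as $-C\alpha_n\norm{v_n}$, and for Condition 2 (nonnegativity of $\E[Z_{n+1}\mid\mathcal F_n]$ whenever $S_n>b_2\alpha_n$ for a \emph{fixed} constant $b_2$) and for the second-moment lower bound in Condition 3, this term must be $O(\alpha_n^2)$, i.e.\ one needs the quantitative rate $\norm{v_n}=\O(\alpha_n)$, not merely $v_n\to 0$. The paper proves exactly this in Lemma \ref{lem:estv} by an induction on the recursion $\norm{v_{n+1}}^2\le\lambda\norm{v_n}^2+C_1\alpha_n^2$ with $\lambda<1$, valid on the event where $\set{x_n}$ stays in a $k$-ball (so that Assumption \ref{as:localb} bounds $g_n$). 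Your own observation that $v_{n+1}=\beta v_n-c\,\alpha_n g_n$ with $g_n$ bounded and $\alpha_n=\Theta(n^{-p})$ does deliver this rate via the geometric sum $\sum_i\beta^{n-i}\alpha_i=\O(\alpha_n)$, so the gap is repairable with what you already have; but the rate, not just the convergence, is the load-bearing fact. A related bookkeeping point: since the constants in the three Pemantle conditions depend on the compact set, the localization has to be done with stopping times $T_{\S}^k$ and an exhaustion $\cup_k\set{\sup_n\norm{x_n}\le k}$ of the full-measure event, as in the paper, rather than by conditioning once on ``$\set{x_n}\subset K$ for some compact $K$.''
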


\subsection{Outline of the proof}

The rest of this section is dedicated to the proof of Theorem \ref{thm:main}. To help the readers, we provide a brief outline for the proof. 

\begin{enumerate}
	
\item In Section \ref{sec:prior}, we summarize preliminary results on the convergence of the sequences generated by SHB and SNAG \citep{liu2022almost}. Combined with \cite{benaim1996dynamical}, we show that the limit set of these sequences enjoy the same properties of the limit sets of trajectories of the corresponding gradient flow. 

\item In Section \ref{sec:lyap}, we state previous results by \cite{benaim1995dynamics,benaim1999dynamics} on the construction of a Lyapunov function around the saddle manifold. This result will be used later in the proof. 

\item The main proof is presented in Section \ref{sec:analysis}, where the probabilistic estimates by \cite{pemantle1990nonconvergence,pemantle1992vertex} are combined with the Lyapunov analysis due to  \cite{benaim1995dynamics,benaim1999dynamics} to show both SHB and SNAG almost surely avoid strict saddle manifolds, without the bounded gradient and noise assumptions, compared with results on SGD by \cite{mertikopoulos2020almost}.

\end{enumerate}

\subsection{Preliminary convergence results and property of limit sets}\label{sec:prior}

Consider the continuous-time gradient flow for the objective function $f$: 
\begin{equation}
    \label{eq:gdflow}
    \dot{x} = -\nabla f(x).
\end{equation}
Let $\set{\Phi_t}$ denote the flow associated with equation (\ref{eq:gdflow}), i.e., $\Phi_t$ maps any initial condition $x$ to the value of the solution to (\ref{eq:gdflow}) at time $t$, $\Phi_t(x)$.

The following lemma is purely deterministic, but can be used to show limit points of the sequences produced by SHB (\ref{eq:shb1}) and SNAG (\ref{eq:snag}) basically enjoy the same properties as the omega limit sets of the trajectories of the gradient flow (\ref{eq:gdflow}). 

\begin{lemma}[\cite{benaim1996dynamical}] \label{lem:chain}
Let $\set{z_n}$, $\set{u_n}$, and $\set{b_n}$ be sequences in $\Real^d$ such that 
$$
z_{n+1} = z_n + \alpha_n(-\nabla f(z_n) + u_n + b_n),
$$
where $\set{\alpha_n}$ is a positive sequence satisfying $\sum_{n=1}^\infty \alpha_n =\infty$ and $\lim_{n\ra\infty}\alpha_n=0$. Assume:
\begin{enumerate}
    \item $\set{z_n}$ is bounded;
    \item $\lim_{n\ra\infty} b_n=0$;
    and 
    \item for each $T>0$, 
    $$
    \lim_{n\ra \infty} \sup_{\set{k:\,0\le \tau_k-\tau_n\le T}} \norm{\sum_{i=n}^{k-1}\alpha_iu_i} = 0,
    $$
    where $\tau_n=\sum_{i=1}^n \alpha_i$. Then the limit set of $\set{z_n}$ is a nonempty, compact, connected set which is invariant under the flow $\set{\Phi_t}$ of (\ref{eq:gdflow}). Furthermore, the limit set belongs to the chain recurrent set of (\ref{eq:gdflow}). 
\end{enumerate}
\end{lemma}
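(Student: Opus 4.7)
The approach I would take is the classical asymptotic pseudo-trajectory (APT) argument due to Benaïm and Hirsch. The plan is to piecewise-affinely interpolate the discrete iterates to form a continuous curve $Z:[0,\infty)\ra\Real^d$ with $Z(\tau_n)=z_n$ and $Z$ affine on each $[\tau_n,\tau_{n+1}]$, and then compare $Z$ with the semiflow $\set{\Phi_t}$ of \eqref{eq:gdflow} on every fixed time horizon $T>0$. Concretely, I would show that for each $T>0$,
$$
\lim_{t\ra\infty}\sup_{0\le s\le T}\norm{Z(t+s)-\Phi_s(Z(t))}=0.
$$
This is the APT property; once established, one invokes the general theorem of \cite{benaim1996dynamical} stating that the limit set of any bounded APT of a continuous semiflow is nonempty, compact, connected, invariant under $\set{\Phi_t}$, and contained in the chain-recurrent set, and that this limit set coincides with the set of accumulation points of $\set{z_n}$.

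The technical core is the APT estimate. First I would telescope the recursion over any window $[\tau_n,\tau_{n+k}]$ with $\tau_{n+k}-\tau_n\le T$, writing
$$
z_{n+k}-z_n = -\sum_{i=n}^{n+k-1}\alpha_i\nabla f(z_i) + \sum_{i=n}^{n+k-1}\alpha_i u_i + \sum_{i=n}^{n+k-1}\alpha_i b_i,
$$
and comparing it with $\Phi_{\tau_{n+k}-\tau_n}(z_n)-z_n = -\int_0^{\tau_{n+k}-\tau_n}\nabla f(\Phi_s(z_n))\,ds$. The difference splits into a noise piece, handled directly by assumption~(3) which delivers uniform-in-$k$ decay of $\norm{\sum_{i=n}^{n+k-1}\alpha_i u_i}$; a bias piece, handled by $b_n\ra 0$ together with $\sum_{i=n}^{n+k-1}\alpha_i\le T$; and a drift piece that I would rewrite as $\sum_{i=n}^{n+k-1}\alpha_i\bigl(\nabla f(\Phi_{\tau_i-\tau_n}(z_n))-\nabla f(z_i)\bigr)$ plus an interpolation-versus-quadrature remainder, using $L$-Lipschitzness of $\nabla f$ and boundedness of $\set{z_n}$. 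A Grönwall step then closes the bound.

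Once APT is in hand, invariance of the limit set follows because if $z_{n_k}\ra z^*$ then $Z(\tau_{n_k}+s)\ra \Phi_s(z^*)$ for each $s\ge 0$. Connectedness is obtained by showing $\norm{z_{n+1}-z_n}\ra 0$: since $\set{z_n}$ is bounded, $\nabla f(z_n)$ is bounded by Lipschitzness; assumption (2) gives $b_n\ra 0$; and applying assumption~(3) with the single-step window $k=n+1$ forces $\alpha_n u_n\ra 0$. Finally, chain-recurrence of the limit set is obtained by stitching long flow arcs that $Z$ tracks to within arbitrary $\eps$, using vanishing resets at the junctions, to produce $(\eps,T)$-pseudo-orbits connecting any two accumulation points.

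The main obstacle I anticipate is the careful bookkeeping in the APT estimate — specifically, combining assumption~(3) with the nonuniform step sizes to control the cumulative error over a real-time window of length $T$ while simultaneously running a Grönwall bound that itself needs an a priori estimate on $\sup_{s\in[0,T]}\norm{Z(t+s)}$. This a priori bound is not immediate from assumption~(1) alone, since $Z$ is an interpolation; one must argue that $Z$ remains in a slightly enlarged compact neighborhood of $\set{z_n}$, which is where $\alpha_n\ra 0$ together with local boundedness of $\nabla f$ enters to control the one-step displacement uniformly.
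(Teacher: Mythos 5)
The paper does not prove this lemma; it imports it verbatim from \cite{benaim1996dynamical}, and your sketch is precisely the standard asymptotic-pseudotrajectory argument of Benaïm--Hirsch that underlies that cited result (interpolate, establish the APT property via a Grönwall bound with the noise, bias, and drift pieces handled exactly as you describe, then invoke the limit-set theorem for bounded APTs). Your outline is correct and coincides with the approach of the source the paper relies on, so there is nothing to reconcile.
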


We recall another lemma that asserts convergence properties of the sequences produced by SHB (\ref{eq:shb2}) and SNAG (\ref{eq:snag2}). 

\begin{lemma}[\cite{liu2022almost}] \label{lem:conv}
    Suppose that Assumptions \ref{as:smoothness} and \ref{as:abc} hold. Furthermore,  $\set{\alpha_n}$ satisfies
$$
		\sum_{n=1}^\infty \alpha_n=\infty, \quad \sum_{n=1}^\infty \alpha_n^2 <\infty. 
		$$
  Then the following results hold:
  \begin{enumerate}
      \item $\nabla f(x_n)\ra 0$, $\nabla f(z_n)\ra 0$, and $v_n\ra 0$, as $n\ra\infty$, almost surely;
      \item $\sum_{i=1}^n \alpha_i(\nabla f(x_n) - g_n)$ is a martingale bounded in $L^2$ and hence converges almost surely.
  \end{enumerate}
\end{lemma}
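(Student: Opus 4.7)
The plan is to establish an approximate supermartingale inequality for a Lyapunov function combining $f(z_n)-f^*$ and $\norm{v_n}^2$, apply the Robbins--Siegmund theorem to extract almost sure summability of $\alpha_n\norm{\nabla f(x_n)}^2$ and convergence of the Lyapunov value, and then upgrade these to the stated limits and the $L^2$-bounded martingale property. First I would apply the descent lemma \eqref{eq:L} to the $z$-update $z_{n+1}=z_n-\frac{\alpha_n}{1-\beta}g_n$ and split $\nabla f(z_n)=\nabla f(x_n)+(\nabla f(z_n)-\nabla f(x_n))$, using the Lipschitz continuity of $\nabla f$ together with $\norm{z_n-x_n}=\frac{\beta}{1-\beta}\norm{v_n}$ to control the mismatch between the point of evaluation $x_n$ of $g_n$ and the linearization point $z_n$. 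After taking conditional expectations (using $\E[g_n\mid\mathcal{F}_n]=\nabla f(x_n)$) and applying Assumption~\ref{as:abc} to bound $\E\norm{g_n}^2$, the result is, for $\alpha_n$ sufficiently small,
$$ \E[f(z_{n+1})-f^*\mid\mathcal{F}_n] \le (1+c_1\alpha_n^2)(f(z_n)-f^*) - c_2\alpha_n\norm{\nabla f(x_n)}^2 + c_3\alpha_n\norm{v_n}^2 + c_4\alpha_n^2, $$
with the smallness of $\alpha_n$ (eventually guaranteed by $\sum\alpha_n^2<\infty$) used to absorb the $B\norm{\nabla f(x_n)}^2$ term from ABC into the descent contribution.

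Second, I would derive a companion bound for the momentum. Squaring $v_{n+1}=\beta v_n-\alpha_n g_n$ (or the SNAG variant $\beta v_n-\beta\alpha_n g_n$), applying Young's inequality, and invoking ABC once more yields
$$ \E[\norm{v_{n+1}}^2\mid\mathcal{F}_n] \le \beta'\norm{v_n}^2 + c_5\alpha_n^2(f(x_n)-f^*) + c_6\alpha_n^2\norm{\nabla f(x_n)}^2 + c_7\alpha_n^2, $$
with some $\beta'\in(\beta,1)$. Forming the weighted composite $V_n=(f(z_n)-f^*)+\lambda\norm{v_n}^2$, with $\lambda$ large enough that the cross-term $c_3\alpha_n\norm{v_n}^2$ is dominated by the contraction budget $\lambda(1-\beta')\norm{v_n}^2$ but small enough that the descent term in $\norm{\nabla f(x_n)}^2$ survives, gives
$$ \E[V_{n+1}\mid\mathcal{F}_n] \le (1+O(\alpha_n^2))V_n - c\,\alpha_n\norm{\nabla f(x_n)}^2 + O(\alpha_n^2). $$
Robbins--Siegmund then delivers almost surely that $V_n$ converges to a finite limit and $\sum_n\alpha_n\norm{\nabla f(x_n)}^2<\infty$; taking expectations also yields $\sup_n\E[f(x_n)]<\infty$ and $\sup_n\E\norm{v_n}^2<\infty$.

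To upgrade these to the stated limits, I would unroll $v_{n+1}=-\sum_{k=1}^n\beta^{n-k}\alpha_k g_k$ and combine the geometric decay of $\beta^{n-k}$ with $\alpha_k\ra 0$ and the almost sure boundedness of $V_n$ (which controls $\norm{g_k}$ via ABC) to conclude $v_n\ra 0$ almost surely. The Lipschitz estimate $\bigl|\norm{\nabla f(x_{n+1})}^2-\norm{\nabla f(x_n)}^2\bigr|\le 2L\norm{\nabla f(x_n)}\,\norm{v_{n+1}} + L^2\norm{v_{n+1}}^2$ combined with $\sum_n\alpha_n\norm{\nabla f(x_n)}^2<\infty$ and $v_n\ra 0$ forbids any subsequence along which $\norm{\nabla f(x_n)}$ stays bounded away from $0$, giving $\nabla f(x_n)\ra 0$ almost surely; and $\nabla f(z_n)\ra 0$ then follows from $\norm{z_n-x_n}=\tfrac{\beta}{1-\beta}\norm{v_n}\ra 0$ and $L$-smoothness. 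For part~(2), the process $M_n=\sum_{i=1}^n\alpha_i(\nabla f(x_i)-g_i)$ is a martingale with orthogonal increments, so
$$ \E\norm{M_n}^2=\sum_{i=1}^n\alpha_i^2\,\E\norm{\nabla f(x_i)-g_i}^2\le\sum_{i=1}^n\alpha_i^2\,\E\norm{g_i}^2, $$
and ABC combined with $\sup_i\E[f(x_i)]<\infty$, the bound $\norm{\nabla f(x_i)}^2\le 2L(f(x_i)-f^*)$ from $L$-smoothness, and $\sum_i\alpha_i^2<\infty$ makes this uniformly bounded; Doob's $L^2$ martingale convergence theorem then delivers almost sure convergence.

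The main obstacle is the tight coupling between the three sequences: $g_n$ is unbiased at $x_n$ while the cleanest descent inequality lives along $z_n$, and under ABC the noise budget $\E\norm{g_n}^2$ is itself controlled by $f(x_n)-f^*$, which sits inside the very Lyapunov function one is trying to bound. The delicate step is therefore tuning the weight $\lambda$ and exploiting $\alpha_n\ra 0$ so that all cross terms---$\alpha_n\norm{v_n}^2$, $\alpha_n^2\norm{\nabla f(x_n)}^2$, and $\alpha_n^2(f(x_n)-f^*)$---are genuinely absorbed into the contraction and descent parts of $V_n$, letting Robbins--Siegmund close the loop without any bounded-gradient or bounded-noise assumption.
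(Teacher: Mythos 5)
The paper does not actually prove this lemma: it is imported verbatim from \cite{liu2022almost}, and the argument there has exactly the architecture you propose --- a descent inequality along the auxiliary iterate $z_n$, a contraction estimate for $\norm{v_n}^2$, a weighted Lyapunov function handled by Robbins--Siegmund, and a Doob $L^2$ argument for the martingale part. So your overall route is the intended one, and your handling of the ABC coupling (absorbing $B\norm{\nabla f(x_n)}^2$ and $A(f(x_n)-f^*)$ into the contraction using $\alpha_n\ra 0$ and the relation between $f(x_n)$ and $f(z_n)$ via $\norm{v_n}$) is sound. Part (2) is also correct as written, given $\sup_n \E[f(x_n)]<\infty$, which you legitimately extract by taking expectations in the Robbins--Siegmund recursion.

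There is, however, one genuine soft spot: the upgrade from $\sum_n \alpha_n\norm{\nabla f(x_n)}^2<\infty$ (which only gives $\liminf_n\norm{\nabla f(x_n)}=0$) to $\nabla f(x_n)\ra 0$. You claim the Lipschitz increment bound together with $v_n\ra 0$ "forbids any subsequence along which $\norm{\nabla f(x_n)}$ stays bounded away from $0$", but termwise smallness of $\norm{v_{n+1}}$ does not control the total displacement $\norm{x_{m_k}-x_{n_k}}$ over an upcrossing window in which $\norm{\nabla f(x_n)}$ would have to climb from $\eps$ to $2\eps$: the window may contain many steps, and bounding $\sum_{j}\norm{v_{j+1}}$ by $\sum_j\alpha_j\norm{g_j}$ runs into the divergent term $C\sum_j\alpha_j$ from the ABC condition. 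The standard fix --- and the one used in the cited reference --- is to decompose the displacement as $\sum_j\alpha_j\nabla f(x_j)+\sum_j\alpha_j(g_j-\nabla f(x_j))$ (after an Abel summation to unwind the momentum), bound the first piece by Cauchy--Schwarz against $\sum_j\alpha_j\norm{\nabla f(x_j)}^2$, and bound the second by the Cauchy property of the almost surely convergent martingale from your part (2). You have all the ingredients, but the logical assembly as written is invalid without invoking the martingale convergence at this point. A second, more minor issue: in the $v_n\ra 0$ step you say boundedness of $V_n$ "controls $\norm{g_k}$ via ABC" --- ABC controls the conditional second moment, not the pathwise norm; the clean statement is that $\sum_k\alpha_k^2\E[\norm{g_k}^2]<\infty$ gives $\alpha_k\norm{g_k}\ra 0$ almost surely, which is what the geometric convolution argument actually needs (or, alternatively, run Robbins--Siegmund directly on the recursion for $\norm{v_n}^2$).
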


\begin{proof}
    The proof can be found in the proof of Theorem 4 \citep[Appendix C]{liu2022almost}. 
\end{proof}

Based on these two lemmas, we can  prove the following result. 

\begin{proposition}\label{prop:limset}
Suppose that Assumptions \ref{as:smoothness}, \ref{as:nonflat}, and \ref{as:abc}  and $\set{\alpha_n}$ satisfies
$$
		\sum_{n=1}^\infty \alpha_n=\infty, \quad \sum_{n=1}^\infty \alpha_n^2 <\infty. 
		$$
Then the sequence $\set{z_n}$ obtained from SHB (\ref{eq:shb1}) and SNAG (\ref{eq:snag}) almost surely satisfies the assumptions of Lemma \ref{lem:chain}. Hence its limit set satisfies the conclusion of Lemma \ref{lem:chain} almost surely. 
\end{proposition}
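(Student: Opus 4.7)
The plan is to cast the $z_n$-iteration into the form required by Lemma \ref{lem:chain} and then verify, one at a time, the three hypotheses on an almost sure event. From (\ref{eq:shb2}) and (\ref{eq:snag2}), in either case we have
\begin{equation*}
    z_{n+1} = z_n - \frac{\alpha_n}{1-\beta}\, g_n = z_n + \tilde{\alpha}_n\bigl(-\nabla f(z_n) + u_n + b_n\bigr),
\end{equation*}
where I set $\tilde{\alpha}_n := \alpha_n/(1-\beta)$, $u_n := \nabla f(x_n) - g_n$, and $b_n := \nabla f(z_n) - \nabla f(x_n)$. The rescaled step size $\tilde{\alpha}_n$ inherits $\sum_n \tilde{\alpha}_n = \infty$ and $\tilde{\alpha}_n\to 0$ from the hypotheses on $\alpha_n$, so Lemma \ref{lem:chain} applies as soon as the remaining three hypotheses are met.

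Next I would dispatch the bias term $b_n$ and the boundedness of $\{z_n\}$. By Lemma \ref{lem:conv}, $v_n\to 0$ almost surely, and since $z_n - x_n = -\tfrac{\beta}{1-\beta}v_n$, Assumption \ref{as:smoothness} gives $\|b_n\|\le L\|z_n-x_n\|\to 0$ almost surely. For boundedness, Lemma \ref{lem:conv} also yields $\nabla f(z_n)\to 0$ almost surely; if $\{z_n\}$ were unbounded on some event of positive probability, passing to a subsequence with $\|z_{n_k}\|\to\infty$ and invoking Assumption \ref{as:nonflat} would force $\liminf_k\|\nabla f(z_{n_k})\|>0$, contradicting $\nabla f(z_n)\to 0$. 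Hence $\{z_n\}$ is bounded almost surely.

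For the third (and, on the face of it, most technical) hypothesis I would use the martingale convergence supplied by Lemma \ref{lem:conv}(2): the partial sums $M_n := \sum_{i=1}^{n-1}\alpha_i u_i$ form an $L^2$-bounded martingale converging to some $M_\infty$ almost surely. The Cauchy criterion then gives
\begin{equation*}
    \sup_{k\ge n}\,\Bigl\|\sum_{i=n}^{k-1}\alpha_i u_i\Bigr\| = \sup_{k\ge n}\|M_k - M_n\| \;\longrightarrow\; 0 \quad \text{a.s.},
\end{equation*}
which is \emph{stronger} than the uniform-in-window estimate required in Lemma \ref{lem:chain}; rescaling by the constant $1/(1-\beta)$ to pass from $\alpha_i u_i$ to $\tilde{\alpha}_i u_i$ is harmless.

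The main obstacle is not really an obstacle but a bookkeeping point: Lemma \ref{lem:chain} is a deterministic statement, so one must intersect the three almost sure events (convergence of $v_n$, convergence of $\nabla f(z_n)$, and convergence of the noise martingale) into a single probability-one event on which all hypotheses hold pathwise. On this event Lemma \ref{lem:chain} delivers the conclusion for $\{z_n\}$, which is exactly what Proposition \ref{prop:limset} asserts.
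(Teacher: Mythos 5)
Your proposal is correct and follows essentially the same route as the paper: the identical decomposition $u_n=\nabla f(x_n)-g_n$, $b_n=\nabla f(z_n)-\nabla f(x_n)$, with $b_n\to 0$ from $v_n\to 0$, boundedness of $\set{z_n}$ from $\nabla f(z_n)\to 0$ together with Assumption \ref{as:nonflat}, and the third hypothesis from almost sure convergence of the noise martingale. Your added remarks (the explicit Lipschitz bound on $b_n$ and the intersection of the almost sure events) only make explicit what the paper leaves implicit.
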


\begin{proof}
For SHB, write
$$
z_{n+1}  = z_n + \frac{\alpha_n}{1-\beta} \left(-\nabla f(z_n) + (\nabla f(x_n)-g_n) + (\nabla f(z_n) -\nabla f(x_n)) \right).
$$
Let $u_n=\nabla f(x_n)-g_n$ and $b_n=\nabla f(z_n) -\nabla f(x_n)$. Then Lemma \ref{lem:conv} implies that $b_n\ra 0$ as $n\ra \infty$ and $\sum_{i=1}^n \alpha_i u_i $ converges. It follows that $$
    \lim_{n\ra \infty} \sup_{k\ge n+1} \norm{\sum_{i=n}^{k-1}\alpha_iu_i} = 0. 
    $$
Boundedness of $\set{z_n}$ follows from the fact that $\nabla f(z_n)\ra 0$ as $n\ra \infty$ (Lemma \ref{lem:conv}) and Assumption \ref{as:nonflat}.

Hence the assumptions of Lemma \ref{lem:chain} are met and its conclusion follows. The proof for SNAG (\ref{eq:snag}) is almost identical and therefore omitted. 
\end{proof}

\begin{remark} 
Since $z_n=x_n - \frac{\beta}{1-\beta}v_n$ and $v_n\ra 0$, the limit sets of $\set{x_n}$ and $\set{z_n}$ coincide and hence both enjoy the property stated in the conclusion of Lemma \ref{lem:chain}. 
\end{remark}

\subsection{Lyapunov analysis around strict saddle manifold}\label{sec:lyap}

The saddle avoidance analysis relies on the construction of a Lyapunov function around the saddle manifold due to \cite[Proposition 9.5]{benaim1999dynamics}. %

In this section, we assume that $f$ is three times continuously differentiable. Since $\mathcal{S}$ is a strict saddle manifold, the center manifold theorem \citep{robinson2012introduction,shub1987global} implies that there exists a submanifold $\M$ of $\Real^d$, namely the center stable manifold of $\S$, that is locally invariant under the flow $\set{\Phi_t}$ %
in the sense that there exists a neighborhood $\U$ of $\S$ and a positive time $t_0$ such that $\Phi_t(\U\cap \M) \subset \M$ for all $\abs{t}\le t_0$. Furthermore, for each $x^*\in \S$, we have $R^d = T_{x^*}\M\oplus E^u_{x^*}$, where $E^u_{x^*}$ is the unstable subspace of $\Real^n$ for (\ref{eq:gdflow}) at $x^*$. Due to the assumption on the strict saddle manifold, the dimension of $E^u_{x^*}$ is at least one and the dimension of $\M$ is at most $d-1$. Relying on center manifold theory and geometric arguments, one can construct a Lyapunov function $V$ based on the following function
$$
\rho(y) = \norm{\Pi(y)-y},
$$
which maps from a neighborhood $\U_0$ of $\S$ to $\Real_{\ge 0}$, where $\Pi(y)$ projects $y$ on $\M$ along the unstable directions of (\ref{eq:gdflow}). The following result was proved in \cite{benaim1999dynamics} (see also \cite{mertikopoulos2020almost} for discussions more specific to strict saddle manifold as defined by Definition \ref{def:saddle}).

\begin{proposition}[\cite{benaim1999dynamics}] \label{prop:lyap}
    There exists a compact neighborhood $\mathcal{U}_{\mathcal{S}}$ of $\mathcal{S}$ and positive constants $\tau$ and $c$ 
  such that 
the function $V:\,\mathcal{U}_\mathcal{S}\ra\Real$ given by
$$
V(x) = \int_0^\tau \rho\left(\Phi_{-t}(x)\right)dt,
$$
where $\set{\Phi_t}$ is the flow generated by (\ref{eq:gdflow}), satisfies the following properties:
\begin{enumerate}
	\item $V$ is twice continuously differentiable on $\U_\S\setminus\M$. For all $x\in \U_\S\cap \M$, $V$ admits a right derivative $DV(x):\,\Real^d\ra\Real^d$ which is Lipschitz, convex, and positively homogeneous. 
	\item For all $x\in \U_\S$, 
	$$
	DV(x)[-\nabla F(x)] \ge c V(x). 
	$$

	\item There exists a positive %
	constant $C$ such that, for all $x\in \U_\S$, 
        \begin{equation}\label{eq:DV}
        DV(x)[v] \ge -C \norm{v},     
        \end{equation}
	for all $v\in\Real^d$.
		
	\item There exists a constant $\gamma>0$ and a neighborhood $V$ of the origin of $\Real^d$ such that for all $x\in \U_\S$ and $v\in V$, we have
	\begin{equation}
		V(x+v) \ge V(x) + DV(x)[v] - \frac{\gamma}{2}\norm{v}^2.
	\end{equation}

	\item There exists a constant $m>0$ such that for all $x\in \U_\S\setminus \M$, 
	$$
	\norm{\nabla V(x)} \ge m,
	$$
	and for all $x\in \U_\S\cap \M$ and $v\in \Real^d$,
	$$
	DV(x)[v]\ge m\norm{v-D\Pi(x)v}. 
	$$
\end{enumerate}
\end{proposition}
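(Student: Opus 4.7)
The plan is to establish each property of $V$ by exploiting the hyperbolic splitting near $\S$ provided by center-stable manifold theory, combined with the smoothing effect of the time-average $V(x)=\int_0^\tau\rho(\Phi_{-t}(x))dt$. The heuristic is that $\rho$ measures transverse distance to $\M$ along the unstable bundle, backward-flow iteration contracts this distance at an exponential rate $\lambda>0$ determined by the uniform lower bound on the unstable eigenvalues of $-\nabla^2 f$ on $\S$, and averaging in time converts the exponential contraction into a coercive estimate along the flow.

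First, I would make the underlying hyperbolic structure explicit. Since $\S$ is a strict saddle manifold with uniformly hyperbolic splitting, the center-stable manifold theorem yields a $C^2$ submanifold $\M$ containing $\S$, locally invariant under $\set{\Phi_t}$, together with a $C^2$ transverse unstable subbundle $E^u$ on a neighborhood $\U_0$ of $\S$. The projection $\Pi$ onto $\M$ along the $E^u$-fibers is $C^2$, and in local product coordinates $(s,u)\in T_{x^*}\M\oplus E^u_{x^*}$ at any $x^*\in\S$ one has $\Pi(s,u)=(s,0)$ to leading order and $\rho(s,u)\approx\norm{u}$. Therefore $\rho$ is $C^2$ off $\M$, globally Lipschitz, vanishes on $\M$, and at points of $\M$ admits a right directional derivative equal to the Euclidean norm on the unstable fiber, which is convex, positively homogeneous, and Lipschitz.

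Next, I would verify Properties 1 and 2 by inheriting these regularity properties through the integral. Using the semigroup identity to write $V(\Phi_t(x))=\int_{-t}^{\tau-t}\rho(\Phi_{-u}(x))du$ and differentiating at $t=0$,
$$
DV(x)[-\nabla f(x)] = \rho(x) - \rho(\Phi_{-\tau}(x)).
$$
The linearization of $\Phi_{-\tau}$ contracts the $E^u$-component by at least $e^{-\lambda\tau}$, so $\rho(\Phi_{-\tau}(x))\le (e^{-\lambda\tau}+o(1))\rho(x)$ on a sufficiently small $\U_\S\subset\U_0$. Combined with the estimate $V(x)\le\tau\rho(x)$ up to a factor absorbing the flow distortion, this yields $DV(x)[-\nabla f(x)]\ge cV(x)$ with $c$ of order $(1-e^{-\lambda\tau})/\tau$ once $\tau$ is chosen large. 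The right-derivative structure on $\U_\S\cap\M$ asserted in Property 1 is preserved under integration against the smooth flow because convexity, Lipschitz continuity, and positive homogeneity of the right derivative of $\rho$ are preserved under pullback by the linear maps $D\Phi_{-t}$ for $t\in[0,\tau]$.

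Finally, Properties 3--5 follow from the geometry of $V$. The bound $DV(x)[v]\ge-C\norm{v}$ comes from $1$-Lipschitzness of $\rho$ and a uniform bound on $\norm{D\Phi_{-t}}$ over $t\in[0,\tau]$. Property 4 is a one-sided semiconcavity estimate: off $\M$ it reduces to Taylor's theorem for the $C^2$ function $V$; on $\M$ it follows from the one-sided Taylor inequality for the Euclidean norm applied inside the integral, together with $C^2$ smoothness of the flow. Property 5 exploits that the transverse component of $\nabla V$ is bounded below: to leading order the unstable directional derivative equals $\int_0^\tau e^{-\lambda t}dt=(1-e^{-\lambda\tau})/\lambda$ times the unit unstable vector, giving the constant $m$; on $\M$ this reproduces the inequality $DV(x)[v]\ge m\norm{v-D\Pi(x)v}$ since $v-D\Pi(x)v$ is precisely the $E^u$-component of $v$. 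The main obstacle is uniformity: all the constants $\tau,c,C,\gamma,m$ must be chosen independently of the base point on $\S$, which demands that the center-stable splitting, the projection $\Pi$, the flow's derivative, and the eigenvalue gap vary continuously with uniform bounds over the compact part of $\S$. This is precisely what the two-sided uniform eigenvalue bounds in Definition \ref{def:saddle} provide; without them the constants could degenerate along $\S$ and the Lyapunov inequalities would fail to hold uniformly on $\U_\S$.
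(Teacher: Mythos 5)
The paper does not actually prove this proposition: it is imported (with item~(3) added via equations (35)--(36) of the source) from B\'ena\"im's Proposition~9.5, and the reader is referred there and to Appendix~C of \cite{mertikopoulos2020almost} for the argument. Your sketch reconstructs that same argument: the identity $DV(x)[-\nabla f(x)]=\rho(x)-\rho(\Phi_{-\tau}(x))$ obtained by differentiating $V(\Phi_t(x))=\int_{-t}^{\tau-t}\rho(\Phi_{-u}(x))\,du$ at $t=0$, exponential backward contraction of the unstable component for item~(2), Lipschitzness of $\rho\circ\Phi_{-t}$ for item~(3), convexity of the norm plus smoothness of the flow for item~(4), and the seminorm $v\mapsto\norm{v-D\Pi(x)v}$ as the right derivative of $\rho$ on $\M$ for items~(1) and~(5). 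So the approach is the right one and matches the cited construction.

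However, the sketch stops exactly where the content of the cited proof lives, and a few points are more than cosmetic. First, you assert that $\Pi$ (hence $\rho$ off $\M$) is $C^2$; this is not automatic for the unstable fibration of a normally hyperbolic set and is why the paper strengthens smoothness to $f\in C^3$ in Section~3.4 and why B\'ena\"im's construction of $\Pi$ takes real work. Second, the estimates $\rho(\Phi_{-\tau}(x))\le Ke^{-\lambda\tau}\rho(x)$ and $V(x)\le K'\rho(x)$ carry constants from a non-adapted norm, so one must fix $\tau$ with $Ke^{-\lambda\tau}<1$ \emph{first} and only then shrink $\U_\S$ so that $\Phi_{-t}(\U_\S)$ stays in the domain of $\rho$ and $\M$ is backward invariant for $t\in[0,\tau]$; your quantifier order leaves this implicit. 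Third, for item~(4) the appeal to ``Taylor's theorem for the $C^2$ function $V$'' off $\M$ is not quite enough: the Hessian of the norm blows up like $1/\rho$ near $\M$ (it is only its \emph{negative} part that is uniformly bounded), and the segment from $x$ to $x+v$ may cross $\M$, so the one-sided second-order bound has to be argued through the convexity of the norm rather than a two-sided Taylor expansion (the inequality is a semiconvexity, not semiconcavity, bound). None of these is a wrong turn, and you correctly identify the uniform eigenvalue bounds of Definition~\ref{def:saddle} as what keeps the constants from degenerating along $\S$; but as written the proposal is an accurate outline of the cited proof rather than a self-contained one.
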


For more details on this construction and proof of the above proposition\footnote{Proposition \ref{prop:lyap}(3) was not explicitly stated in \cite{benaim1999dynamics}, but can be easily derived from (35) and (36) in the proof of \cite[Proposition 9.5]{benaim1999dynamics}.}, readers are referred to \cite[Proposition 9.5]{benaim1999dynamics} (see also \cite[Appendix C]{mertikopoulos2020almost}). For more background information on the topic, we refer the readers to \cite{benaim1995dynamics,benaim1999dynamics,lee2012introduction,shub1987global,robinson2012introduction}.

\subsection{Almost sure saddle avoidance analysis}\label{sec:analysis}

In this section, we analyze almost sure avoidance of any strict saddle manifold. The following lemma due to \cite[Lemma 5.5]{pemantle1992vertex} plays an important role in the probabilistic argument of the proof. 

\begin{lemma}[\cite{pemantle1992vertex}]\label{lem:pem}
	Let $\set{S_n}$ be a nonnegative stochastic process defined as $S_n=S_0+\sum_{i=1}^n Z_i$, where $\set{Z_n}$ is adapted to a filtration $\set{\mathcal{F}_n}$. Suppose that $\set{\alpha_n}$ satisfies $\alpha_n=\Theta\left(\frac{1}{n^p}\right)$, where $\frac12<p\le 1$. Suppose there exist positive constants $b_1$, $b_2$, and $b_3$ such that the following hold almost surely for all $n$ sufficiently large: %
	\begin{enumerate}
		\item $\norm{Z_{n+1}}\le b_1 \alpha_n$;
		\item $\mathbf{1}_{\set{S_n>b_2\alpha_n}} \E[Z_{n+1}\mid \mathcal{F}_n]\ge 0$; 
		\item $\E[S_{n+1}^2-S_n^2\mid \mathcal{F}_n]\ge b_3\alpha_n^2$. 
	\end{enumerate}
Then $\mathbb{P}(\lim_{n\ra\infty}S_n=0)=0$.
\end{lemma}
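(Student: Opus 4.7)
My plan is to argue by contradiction using L\'evy's upward 0-1 law. Let $A=\set{\lim_{n\ra\infty} S_n = 0}$ and suppose $\mathbb{P}(A)>0$. Then the L\'evy martingale convergence theorem gives $\mathbb{P}(A\mid\mathcal{F}_n)\ra 1$ almost surely on $A$, so it suffices to exhibit a uniform constant $c>0$ and, on $A$, infinitely many times $n$ at which $\mathbb{P}(A^c\mid\mathcal{F}_n)\ge c$. Everything then reduces to a quantitative ``escape plus persistence'' estimate producing a uniform lower bound on the probability that the future trajectory fails to converge to zero.

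The first ingredient is a noise lower bound at small scales. Condition~1 forces $\abs{\E[Z_{n+1}\mid\mathcal{F}_n]}\le b_1\alpha_n$, so condition~3 yields $\E[Z_{n+1}^2\mid\mathcal{F}_n]\ge b_3\alpha_n^2 - 2b_1 S_n\alpha_n$, which is $\Theta(\alpha_n^2)$ whenever $S_n=o(\alpha_n)$. Because $Z_{n+1}$ is almost surely supported in $[-S_n,b_1\alpha_n]$, a Paley--Zygmund type inequality applied to this conditional second-moment bound yields positive constants $c_\ast>b_2$ and $q$ with $\mathbb{P}(Z_{n+1}\ge c_\ast\alpha_n\mid\mathcal{F}_n)\ge q$ whenever $S_n$ is sufficiently small. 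In words, whenever the process sits near zero it is kicked above the threshold $b_2\alpha_n$ in one step with uniform positive probability.

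The second ingredient is a persistence estimate for $S_n\ge c_\ast\alpha_n$. For $m\ge n$ I would decompose $S_m = S_n + D_m + M_m$, where the predictable drift $D_m$ is non-decreasing as long as $S_k>b_2\alpha_k$ (by condition~2) and $M_m$ is a martingale with $\abs{M_{m+1}-M_m}\le 2b_1\alpha_m$. The hypothesis $p>1/2$ is critical here: it forces $\sum_{j\ge n}\alpha_j^2<\infty$ with the tail vanishing as $n\ra\infty$. Applying Doob's maximal inequality to the exponential supermartingale $e^{-\lambda M_m}$ (equivalently, Azuma--Hoeffding on the stopped process) then shows that with probability bounded below by some fixed $c>0$ the cumulative martingale deficit stays inside the safety margin $(c_\ast-b_2)\alpha_n$, in which case $D_m$ continues to be accumulated and $S_m$ never revisits the $b_2\alpha_m$ threshold. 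Chaining this persistence estimate with the renewal opportunities provided by the first ingredient (at a stopping-time sequence where the noise kicks $S$ above threshold) yields the required uniform lower bound on $\mathbb{P}(A^c\mid\mathcal{F}_n)$, contradicting L\'evy.

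The main obstacle I anticipate is reconciling scales: both the noise-induced kick size and the persistence margin are of order $\alpha_n$, while $S_n\not\ra 0$ is an absolute-scale statement. The slack comes from the non-decreasing cumulative drift $D_m$ supplied by condition~2 together with the finite total noise budget $\sum_n\alpha_n^2<\infty$ (available precisely because $p>1/2$), which together prevent the martingale from swamping the drift over long horizons. The delicate bookkeeping is weaving the escape and persistence estimates into a single renewal argument --- in the spirit of Pemantle's original urn-model analysis --- and choosing the stopping times carefully so that the uniform lower bound $c$ is insensitive to where on $A$ we are observing $\mathbb{P}(A\mid\mathcal{F}_n)$.
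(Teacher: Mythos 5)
Your overall architecture --- contradiction via L\'evy's upward 0--1 law, an ``escape'' estimate showing $S$ gets kicked above a threshold with uniform conditional probability, and a ``persistence'' estimate via optional stopping on the submartingale created by condition~2 --- is indeed the skeleton of Pemantle's argument. (For reference, the paper does not prove this lemma at all; it is quoted from Pemantle (1992, Lemma~5.5) with the remark that the case $\tfrac12<p\le 1$ is proved identically to $p=1$.)

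However, there is a genuine gap in the scales, and it breaks the persistence step as you have set it up. You kick $S_n$ up to level $c_\ast\alpha_n$ in one step and then try to show that the martingale part stays within the margin $(c_\ast-b_2)\alpha_n$ forever with probability bounded below. But the martingale accumulated from time $n$ onward has increments of order $\alpha_m$ and total conditional variance of order $\sum_{m\ge n}\alpha_m^2=\Theta(n^{1-2p})$, whose square root $n^{1/2-p}$ is asymptotically \emph{much larger} than the margin $\alpha_n=\Theta(n^{-p})$. Azuma--Hoeffding then gives an exceedance bound of the form $\exp\left(-\Theta(n^{-1})\right)\ra 1$, i.e.\ no control at all; generically the martingale fluctuation swamps a margin of size $\alpha_n$, so the probability of staying inside it is \emph{not} bounded below. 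The non-negative drift from condition~2 cannot rescue this: it is only a sign condition and accumulates nothing quantitative. The missing idea is an intermediate lift to the correct scale: one first applies condition~3 summed over an entire window $[n,m]$ (not a single step) to show that, with probability bounded below, $S$ reaches level $\Theta\bigl((\sum_{i\ge n}\alpha_i^2)^{1/2}\bigr)=\Theta(n^{1/2-p})$ --- precisely the order of the total future martingale fluctuation --- and only then runs the optional-stopping/Chebyshev persistence argument, which now closes because margin and fluctuation are comparable. This is also exactly where $p>\tfrac12$ enters: it makes the tail sum finite while keeping $n^{1/2-p}\gg\alpha_n$. A secondary issue: your one-step Paley--Zygmund kick requires $S_n=o(\alpha_n)$, whereas on the event $\set{S_n\ra 0}$ one only knows $S_n\ra 0$, which need not be $o(\alpha_n)$; the windowed second-moment estimate avoids this as well.
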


The lemma was proved in \cite{pemantle1992vertex} for $p=1$, but the proof for $\frac12<p\le 1$ is the same. The same result was proved and used in \cite{pemantle1990nonconvergence} but not explicitly stated. See \cite[Lemma 9.6]{benaim1999dynamics} for a more general form of this result. 

We need another technical lemma that states when the sequence $\set{x_n}$ is uniformly bounded, then by Assumption \ref{as:localb}, we can obtain a uniform rate of convergence by $v_n$ to zero, at least for $\set{\alpha_t}$ chosen as in Lemma \ref{lem:pem}.

\begin{lemma}\label{lem:estv}
Let $\set{x_n}$ and $\set{v_n}$ be obtained from SHB (\ref{eq:shb2}) or SNAG (\ref{eq:snag2}) with $\set{\alpha_n}$ satisfying $\alpha_n=\Theta(\frac{1}{n^p})$, where $\frac12<p\le 1$. Suppose that there exists some $K>0$ such that $\norm{x_n}\le K$ for all $n\ge 1$. Then $v_n=\O(\frac{1}{n^{p}})$. 
\end{lemma}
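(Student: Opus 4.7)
The plan is to exploit the linear recursion structure of $v_n$ in both SHB (\ref{eq:shb2}) and SNAG (\ref{eq:snag2}), use Assumption \ref{as:localb} to uniformly bound the stochastic gradients, and then reduce the claim to a standard tail estimate for a geometrically weighted sum of step sizes.

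First, I would apply Assumption \ref{as:localb} to the compact set $\{x\in\Real^d:\,\norm{x}\le K\}$, which by hypothesis contains every $x_n$. This yields a single deterministic constant $G>0$ such that $\norm{g_n}=\norm{\nabla f(x_n;\xi_n)}\le G$ almost surely for all $n\ge 1$. Crucially, $G$ does not depend on $n$ or $\omega$, which is what lets us pass from a recursion to a clean pointwise bound.

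Next, I would unroll the scalar-linear recursion. For SHB, $v_{n+1}=\beta v_n-\alpha_n g_n$ gives, by induction,
\begin{equation*}
v_{n+1}=\beta^{n}v_1-\sum_{k=1}^{n}\beta^{n-k}\alpha_k g_k,
\end{equation*}
and for SNAG the same formula holds with an extra factor of $\beta$ on each summand. Taking norms,
\begin{equation*}
\norm{v_{n+1}}\le \beta^{n}\norm{v_1}+G\sum_{k=1}^{n}\beta^{n-k}\alpha_k.
\end{equation*}
The first term decays geometrically and is $o(1/n^p)$, so the whole problem reduces to showing
\begin{equation*}
T_n := \sum_{k=1}^{n}\beta^{n-k}\alpha_k = \O\!\left(\tfrac{1}{n^p}\right).
\end{equation*}

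Finally, I would prove this last estimate by the standard head/tail split at $m:=\lceil n/2\rceil$. For the ``old'' indices $1\le k\le m$, the geometric weight $\beta^{n-k}\le \beta^{n/2}$ is exponentially small and the partial sum of $\alpha_k$'s is at most $\sum_{k\ge 1}\alpha_k$ truncated, but even bounding crudely by $m\cdot\alpha_1$ gives a contribution $\O(n\beta^{n/2})=o(n^{-p})$. For the ``recent'' indices $m<k\le n$, we have $\alpha_k=\Theta(1/k^p)=\O(1/n^p)$ uniformly in $k$, while $\sum_{k=m+1}^{n}\beta^{n-k}\le \sum_{j\ge 0}\beta^{j}=1/(1-\beta)$. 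The recent-index contribution is therefore $\O(1/n^p)$, which dominates and gives $T_n=\O(1/n^p)$. Combining everything, $v_n=\O(1/n^p)$, as claimed.

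The argument is genuinely routine; there is no serious obstacle. The one point worth being careful about is that we must invoke Assumption \ref{as:localb} on the \emph{ball} $\{\norm{x}\le K\}$ (a compact set) rather than on the orbit $\{x_n\}$ itself, so that the resulting bound $G$ holds almost surely and uniformly in $n$, and the subsequent deterministic unrolling goes through pathwise.
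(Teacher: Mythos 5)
Your proof is correct, but it takes a genuinely different route from the paper's. You unroll the linear recursion $v_{n+1}=\beta v_n-\alpha_n g_n$ explicitly to get $v_{n+1}=\beta^n v_1-\sum_{k=1}^n\beta^{n-k}\alpha_k g_k$, bound $\norm{g_k}\le G$ via Assumption \ref{as:localb}, and then estimate the geometric convolution $\sum_{k=1}^n\beta^{n-k}\alpha_k$ by the standard head/tail split at $n/2$; the old indices contribute $\O(n\beta^{n/2})=o(n^{-p})$ and the recent ones contribute $\O(n^{-p})$ since $\sum_{j\ge 0}\beta^j=1/(1-\beta)$. The paper instead works with $\norm{v_n}^2$: it expands the square, absorbs the cross term $\inner{g_n,v_n}$ via Young's inequality into a contraction $\norm{v_{n+1}}^2\le\lambda\norm{v_n}^2+C_1\alpha_n^2$ with $\lambda=\beta^2+\eps<1$, and then proves $\norm{v_n}^2=\O(n^{-2p})$ by induction, choosing the constant $C_2\ge C_1B_1/(\mu-\lambda)$ so that the inductive step closes once $n^{2p}/(n+1)^{2p}\ge\mu$. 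Your argument is more elementary and arguably cleaner here, since it never introduces the cross term that the squaring creates and it makes the mechanism (geometric smoothing of a $\Theta(n^{-p})$ sequence) transparent; the paper's squared-recursion-plus-induction template is the one that would survive if $g_n$ were only controlled in $L^2$ rather than almost surely, but under Assumption \ref{as:localb} that generality is not needed. Your closing remark about invoking the assumption on the closed ball $\set{\norm{x}\le K}$ rather than on the orbit itself is exactly the right point of care, and matches how the paper uses the assumption.
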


\begin{proof}
For SHB, we have 
\begin{align*}
	\norm{v_{n+1}}^2 = \beta^2 \norm{v_n}^2 - 2\beta\alpha_n \inner{g_n,v_n} + \alpha_n^2 \norm{g_n}^2. 
\end{align*}
For SNAG, we have
\begin{align*}
	\norm{v_{n+1}}^2 = \beta^2 \norm{v_n}^2 - 2\beta^2\alpha_n \inner{g_n,v_n} + \beta^2\alpha_n^2 \norm{g_n}^2. 
\end{align*}
In either case, using the elementary inequality that $ \inner{a,b}\le \eps\norm{a}^2+\frac{1}{\eps}\norm{b}^2$, we can find two constants $\eps>0$ and $C>0$, where both only depend on $\beta$ and $\eps>0$ can be made arbitrarily small, such that 
\begin{align*}
	\norm{v_{n+1}}^2 \le (\beta^2+\eps) \norm{v_n}^2 +C\alpha_n^2 \norm{g_n}^2. 
\end{align*}
We can choose $\eps$ such that $\beta^2+\eps<1$. Let $\lambda=\beta^2+\eps$. By Assumption \ref{as:localb}, there exists another constant $C_1$ such that 
\begin{align*}
	\norm{v_{n+1}}^2 \le \lambda \norm{v_{n}}^2  +C_1\alpha_n^2. 
\end{align*}
First, observe that the above implies 
$$
\norm{v_{n+1}}^2 -\norm{v_{n}}^2 \le  C_1\alpha_n^2. 
$$
Summing both sizes from $1$ to $m$ shows  $\norm{v_{m+1}}^2\le \norm{v_1}^2+ C_1\sum_{i=1}^{m}C_1\alpha_n^2$, which shows that $\set{v_n^2}$ is uniformly bounded, provided the uniform bound on $\set{x_n}$. 

To show a specific rate estimate for $\set{v_n}$, we claim that for $n$ sufficiently large, $v_n^2=\O\left(\frac{1}{n^{2p}}\right)$, i.e., there exists some constant $C_2$ such that $v_n^2\le \frac{C_2}{n^{2p}}$.  Since  $\alpha_n^2=\Theta\left(\frac{1}{n^{2p}}\right)$, there exist positive constants $A$ and $B$ such that 
$$
\frac{A_1}{n^{2p}}\le \alpha_n^2 \le \frac{B_1}{n^{2p}},
$$
for all $n$. Fix any $\mu\in (\lambda,1)$. Choose $C_2$ such that $C_2\ge \frac{C_1B_1}{\mu-\lambda}$. By induction, suppose $v_n^2 \le \frac{C_2}{n^{2p}}$ holds for some $n$ such that $\frac{n^{2p}}{(n+1)^{2p}}\ge \mu$ (which also holds for all subsequent $n$). We have
\begin{align*}
	v_{n+1}^2 &\le \lambda \norm{v_{n}}^2  +C_1\alpha_n^2 \le  \frac{\lambda C_2}{n^{2p}} + \frac{C_1B_1}{n^{2p}} \\ 
	& = \frac{C_2}{(n+1)^{2p}} - \frac{C_2}{(n+1)^{2p}} + \frac{\lambda C_2}{n^{2p}} + \frac{C_1B_1}{n^{2p}} \\
	&\le \frac{C_2}{(n+1)^{2p}} - \frac{\mu C_2- \lambda C_2 - C_1B_1}{n^{2p}}\\
	&\le \frac{C_2}{(n+1)^{2p}},
\end{align*}
by the choice of $C_2$. Hence the estimate holds for all $n$ sufficiently large. 
\end{proof}

With these preliminary results, we are ready to prove Theorem \ref{thm:main}. 

\ \\
\noindent\textbf{Proof of Theorem \ref{thm:main}}

Let $\U_\S$ be the neighborhood defined in Proposition \ref{prop:lyap}. Consider the sequences $\set{x_n}$ and $\set{z_n}$ generated from SHB or SNAG. Without loss of generality, assume $z_0=x_0\in \U_\S$ (the proof for $z_N\in \U_\S$ for any $N$ is identical). For any $k\ge 1$, define the stopping time 
$$
T_\S^k = \set{n\ge 0:\,z_n\not\in \U_\S\text{ or }\norm{x_n}>k},
$$
which is the first exit time of $\set{z_n}$ from $\U_\S$ or $\set{x_n}$ from the $k$-radius ball. Define two sequences of random variables $\set{Z_n}$ and $\set{S_n}$ as follows\footnote{Note that we should have a superscript $k$ on $\set{Z_n}$ and $\set{S_n}$ as they depend on $k$, but we omit it to simplify the notation.}:
\begin{equation}
	Z_{n+1} = (V(z_{n+1}) - V(z_{n}))\mathbf{1}_{\set{n\le T_\S^k}} + \alpha_n\mathbf{1}_{\set{n> T_\S^k}},
\end{equation}
and
\begin{equation}
	S_0 = V(z_0),\quad S_n = S_0 + \sum_{i=1}^nZ_n.
\end{equation}
Clearly, if $T_\S=\infty$, then $S_n=V(z_n)$ for all $n\ge 0$ by telescoping. 

We verify that $\set{Z_n}$ and $\set{S_n}$ defined above satisfy the conditions of Lemma \ref{lem:pem}. 

\textbf{Condition 1:} It is clearly satisfied if $n>T_\S^k$. If $n\le T_\S^k$, since $V$ is locally Lipschitz and the stochastic gradient is locally bounded (Assumption \ref{as:localb}), we have $\norm{Z_{n+1}}\le b_1\alpha_n$ for some $b_1>0$. 

\textbf{Condition 2:} If $n>T_\S^k$, we have $Z_{n+1}=\alpha_n$ and 
\begin{equation}\label{eq:lowest0}
\mathbf{1}_{\set{n>T_\S^k}} \E[Z_{n+1}\mid \mathcal{F}_n]\ge \mathbf{1}_{\set{n>T_\S^k}}\alpha_n \ge 0. 
\end{equation}
If $n\le T_\S^k$, we have $z_n\in \U_\S$ and, by Proposition \ref{prop:lyap}, 
\begin{align}
Z_{n+1} &= V(z_{n+1}) - V(z_{n}) \ge DV(z_n)[-\frac{\alpha_n}{1-\beta}g_n] -\frac{\gamma \alpha_n^2}{2(1-\beta)^2}\norm{g_n}^2\notag\\
&  \ge \frac{\alpha_n}{1-\beta} DV(z_n)[-\nabla f(z_n)] + \frac{\alpha_n}{1-\beta} DV(z_n)[\nabla f(z_n)-g_n]  -\frac{\gamma \alpha_n^2}{2(1-\beta)^2}\norm{g_n}^2\notag\\
& \ge \frac{\alpha_n c}{1-\beta} V(z_n)  + \frac{\alpha_n}{1-\beta} DV(z_n)[\nabla f(z_n)-g_n]  -C_1\alpha_n^2, \label{eq:zn}
\end{align}
where $C_1>0$ is a constant that can be derived from the bound $k$ for $\set{x_n}$ and Assumption \ref{as:abc}. Taking the conditional expectation w.r.t. $\mathcal{F}_n$ gives
\begin{align}
	\E[Z_{n+1}\mid\mathcal{F}_n] & \ge \frac{\alpha_n c}{1-\beta} V(z_n)  + \frac{\alpha_n}{1-\beta} \E[DV(z_n)[\nabla f(z_n)-g_n]\mid\mathcal{F}_n]  -C_1\alpha_n^2. \label{eq:lowerest}
\end{align}
Now we can use convexity of the right derivative of $V$ (Proposition \ref{prop:lyap}) and the conditional Jensen's inequality to obtain 
\begin{align*}
\E[DV(z_n)[\nabla f(z_n)-g_n]\mid\mathcal{F}_n] & \ge DV(z_n)[\nabla f(z_n)-\E[g_n \mid\mathcal{F}_n ]] \\
&  \ge DV(z_n)[\nabla f(z_n)-\nabla f(x_n)]\\
&  \ge -C_2\norm{v_n},
\end{align*}
where $C_2>0$ is a constant that can be derived from Proposition \ref{prop:lyap}, the Lipschitz continuity of $\nabla f$, and (\ref{eq:zv}). Putting this back to (\ref{eq:lowerest}) and using Lemma \ref{lem:estv}, we obtain 
\begin{align}
	\E[Z_{n+1}\mid\mathcal{F}_n] & \ge \frac{\alpha_n c}{1-\beta} V(z_n)  -\frac{\alpha_n C_2\norm{v_n}}{1-\beta} -C_1\alpha_n^2 \ge \frac{\alpha_n c}{1-\beta} V(z_n)- C_3\alpha_n^2,  \label{eq:lowerest2}
\end{align}
for some $C_3>0$. In other words, we have shown
\begin{align}
	\mathbf{1}_{\set{n\le T_\S^k}}\E[Z_{n+1}\mid\mathcal{F}_n] & \ge  \mathbf{1}_{\set{n\le T_\S^k}}\left(\frac{\alpha_n c}{1-\beta} V(z_n)- C_3\alpha_n^2\right). \label{eq:lowerest3}
\end{align}
Clearly, if we choose $b_2=\frac{C_3(1-\beta)}{c}$, then $S_n=V(z_n)>b_2\alpha_n$ implies $\E[Z_{n+1}\mid\mathcal{F}_n]\ge 0$. Condition 2 is verified. 

\textbf{Condition 3:} We have
\begin{align*}
	\E[S_{n+1}^2-S_n^2\mid \mathcal{F}_n] & = \E [Z_{n+1}^2 + 2S_n Z_{n+1} \mid \mathcal{F}_n]\\
	& = \E [Z_{n+1}^2  \mid \mathcal{F}_n] + 2S_n\E [  Z_{n+1} \mid \mathcal{F}_n].
\end{align*}
If $S_n > b_2\alpha_n$, condition 2 implies that $\E [  Z_{n+1} \mid \mathcal{F}_n]\ge 0$ and hence the right-hand side of the above equation is non-negative. If $S_n \le  b_2\alpha_n$, it follows from (\ref{eq:lowest0}) and (\ref{eq:lowerest3}) that 
$$
2S_n\E [  Z_{n+1} \mid \mathcal{F}_n] \ge -2b_2C_3\alpha_n^3.
$$
Hence, to verify condition 3, it suffices to show that there exists a constant $b_4>0$ such that 
$$
\E [Z_{n+1}^2  \mid \mathcal{F}_n] \ge b_4\alpha_n^2,
$$
for all $n$ sufficiently large. If $n>T_\S^k$, this obviously holds. For $n\le T_\S^k$, we investigate $\E[Z_{n+1}^+ \mid \mathcal{F}_n]$. In view of Jensen's inequality
$$
\E [Z_{n+1}^2  \mid \mathcal{F}_n] \ge \E[Z_{n+1}^+ \mid \mathcal{F}_n]^2,
$$
we only need to show $\E[Z_{n+1}^+ \mid \mathcal{F}_n]=\Omega(\alpha_n)$. Consider two cases: (i) $z_n\in \M$; (ii) $z_n\not \in \M$. If $z_n\not\in \M$, the right derivative in (\ref{eq:zn}) becomes the gradient and from it we obtain 
\begin{align}
	Z_{n+1} & \ge \frac{\alpha_n}{1-\beta} \inner{\nabla V(z_n),(\nabla f(z_n)-\nabla f(x_n)) + (\nabla f(x_n) -g_n)}  -C_1\alpha_n^2\notag\\
	&\ge -\frac{\alpha_n C_2}{1-\beta}\norm{v_n} + \frac{\alpha_n}{1-\beta} \inner{\nabla V(z_n),\nabla f(x_n) -g_n} - C_1\alpha_n^2\notag \\
	&\ge \frac{\alpha_n}{1-\beta} \inner{\nabla V(z_n),\nabla f(x_n) -g_n} - C_3\alpha_n^2, \label{eq:zn2}
\end{align}
where $C_1$, $C_2$, and $C_3$ are as defined above in the proof for condition 2. Taking conditional expectation on the positive part, we obtain 
\begin{align}
	\E [Z_{n+1}^+ \mid \mathcal{F}_n] 
	&\ge \frac{\alpha_n}{1-\beta} \E [\inner{\nabla V(z_n),\nabla f(x_n) -g_n}^+] - C_3\alpha_n^2\notag\\
	&\ge \frac{\alpha_n}{1-\beta}\norm{\nabla V(z_n)}b- C_3\alpha_n^2\notag\\
	&\ge \frac{\alpha_n m b}{1-\beta}- C_3\alpha_n^2,  \label{eq:zn3}
\end{align}
where we used Assumption \ref{as:uniform} on the unit vector $\nabla V(z_n)/\norm{\nabla V(z_n)}$ and then Proposition \ref{prop:lyap}. Hence we do have $\E[Z_{n+1}^+ \mid \mathcal{F}_n]=\Omega(\alpha_n)$ in this case. 

If $z_n\in \M$, we can choose a unit vector $u_n$ such that $\inner{u_n,y}$ for all $y\in T_{z_n}\M$.  Since $D\Pi(z_n)$ takes values in $T_{z_n}\M$ \cite[p.~51]{benaim1999dynamics}, we have
$
\inner{u_n,D\Pi(z_n)v}
$
for any $v\in\Real^d$. In view of (\ref{eq:zn}) and by Proposition \ref{prop:lyap}, we estimate
\begin{align}
DV(z_n)[\nabla f(z_n)-g_n] 
	&\ge m\norm{(f(z_n)-g_n) - D\Pi(z_n)[\nabla f(z_n)-g_n]} \notag\\
	&\ge \inner{u_n,(f(z_n)-g_n) - D\Pi(z_n)[\nabla f(z_n)-g_n]}\notag\\
	&= \inner{u_n,f(z_n)-g_n}, \label{eq:zn4}
\end{align}
where the first inequality is by Proposition \ref{prop:lyap}, the second one is Cauchy-Schwartz, and the equality is by the choice of $u_n$ above. Continuing from (\ref{eq:zn4}), we obtain
\begin{align}
\inner{u_n,f(z_n)-g_n} & = \inner{u_n,f(z_n)-\nabla f(x_n)} +  \inner{u_n,\nabla f(x_n)-g_n} \notag\\
& \ge -\frac{L\beta}{1-\beta}\norm{v_n} + \inner{u_n,\nabla f(x_n)-g_n}, \label{eq:zn5}
\end{align}
where we used Lipschitz continuity of $\nabla f$. Putting (\ref{eq:zn4}) and (\ref{eq:zn5}) in (\ref{eq:zn}) and using Lemma \ref{lem:estv} and Assumption \ref{as:uniform}, we obtain 
\begin{align*}
\E[Z_{n+1}^+ \mid\mathcal{F}_n]	&\ge \frac{\alpha_t}{1-\beta} \E [\inner{u_n,\nabla f(x_n)-g_n}^+] -C_4\alpha_n^2\ge \frac{\alpha_t }{1-\beta} b - C_4\alpha_n^2,
\end{align*}
for sufficiently large $n$, where $b>0$ is from Assumption \ref{as:uniform}, and $C_4$ can be derived from Lemma \ref{lem:estv}. Hence we have $\E[Z_{n+1}^+ \mid \mathcal{F}_n]=\Omega(\alpha_n)$ in the second case as well. 

Since conditions 1--3 of Lemma \ref{lem:pem} are verified, we conclude by Lemma \ref{lem:pem} that $\mathbb{P}(S_n\ra 0\text{ as }n\ra\infty)=0$. We prove that $\mathbb{P} (T_\S^k=\infty)=0$ for any $k$. Suppose that there exists some $k$ such that $\mathbb{P} (T_\S^k=\infty)>0$. For almost every path in $\set{T_\S^k=\infty}$, by Proposition \ref{prop:limset}, the limit set of $\set{z_n}$, denoted by $L(\set{z_n})$ forms an invariant subset of $\U_\S$ under the flow $\set{\Phi_t}$. Pick any limit point $z\in L(\set{z_n})\subset \U_\S$, we have $\Phi_t(y)\in L(\set{z_n})\subset \U_\S$ for all $t\ge 0$. By Proposition \ref{prop:lyap}, $V(\Phi_t(z))\ge e^{ct}V(z)$ for all $t>0$. Hence we must have $V(z)=0$. In other words, the sequence $S_n=V(z_n)\ra 0$. Since $\mathbb{P}(S_n\ra 0\text{ as }n\ra\infty)=0$, we must have $\mathbb{P} (T_\S^k=\infty)=0$. It follows that $\mathbb{P} (T_\S^k<\infty)=1$ for all $k$. Let $\Omega_0$ denote the event on which the conclusion of Lemma \ref{lem:conv} holds. Then $\mathbb{P}(\Omega_0)=1$. Let $B_k$ denote the event $\set{\sup_{n} \norm{x_n} \le k }\cap \Omega_0$. By Assumption \ref{as:nonflat}, almost every $\set{x_n}$ will be ultimately bounded, because $\nabla f(x_n)\ra 0$ as $n\ra \infty$ and $\lim\inf_{\norm{x}\ra\infty} \norm{f(x)}>0$. It follows that $\cup_{k=1}^\infty B_k=\Omega_0$. On each $B_k$, $T_\S^k<\infty$ implies that $\set{z_n}$ eventually exits $\U_\S$ (in fact infinitely often by repeating the argument in this proof). As a result, $z_n\not\ra \S$ as $n\ra \infty$ on $B_k$ for each $k$, and hence entirely on $\Omega_0$. The proof is complete.
\hfill $\blacksquare$

\section{Conclusions}

In conclusion, our study provides evidence for the effectiveness of various stochastic gradient descent methods, including SGD, SHB, and SNAG, in avoiding strict saddle points. Our analysis expands upon previous work on SGD by removing the requirement for bounded gradients and noise in the objective function, and instead relying on a more practical local boundedness assumption on the noisy gradient. The results of our study demonstrate that even with non-bounded gradients and noise, these methods can still converge to local minimizers. This research contributes to the understanding of the behavior of gradient descent methods in non-convex optimization and has potential implications for their use in solving a wide range of machine learning and optimization problems.

\bibliography{colt23}

\end{document}